\newlength{\figurewidth}
\newlength{\smallfigurewidth}
\newtheorem{definition}{Definition}
\newtheorem{theorem}{Theorem}
\newtheorem{proof}{Proof}
\begin{document}

\title
{\large
\textbf{On The Energy Statistics of Feature Maps in Pruning of Neural Networks with Skip-Connections\small\thanks{This work has been supported in part by the Army Research Office grant No. W911NF-15-1-0479.}}
}

\author{%
Mohammadreza Soltani$^{\ast}$, Suya Wu$^{\ast}$, Yuerong Li$^{\ast}$ \\ Jie Ding$^{\dag}$, and Vahid Tarokh$^{\ast}$\\[0.5em]
{\small\begin{minipage}{\linewidth}\begin{center}
\begin{tabular}{cc}
$^{\ast}$Department of Electrical and Computer Engineering, $^{\dag}$ School of Statistics &  \\
$^{\ast}$Duke University, $^{\dag}$University of Minnesota &  \\
\\
\end{tabular}
\end{center}\end{minipage}}
}

\maketitle
\thispagestyle{empty}

\begin{abstract}
We propose a new structured pruning framework for compressing Deep Neural Networks (DNNs) with skip-connections, based on measuring the statistical dependency of hidden layers and predicted outputs. The dependence measure defined by the energy statistics of hidden layers serves as a model-free measure of information between the feature maps and the output of the network. The estimated dependence measure is subsequently used to prune a collection of redundant and uninformative layers. Model-freeness of our measure guarantees that no parametric assumptions on the feature maps distribution are required, making it computationally appealing for very high dimensional feature space in DNNs. Extensive numerical experiments on various architectures show the efficacy of the proposed pruning approach with competitive performance to state-of-the-art methods.
\end{abstract}

\vspace{-.3cm}
\section{Introduction}
\label{intro}
\vspace{-.2cm}
Modern deep convolutional neural networks (CNNs) with competitive performance in image understanding competitions (e.g., \textit{ImageNet} \cite{deng2009imagenet} and \textit{COCO}~\cite{lin2014microsoft}) are over-parameterized with millions of parameters, resulting in a large memory utilization and computational complexity~\cite{krizhevsky2012imagenet,simonyan2014very}. 
As a result, deploying deep models with high accuracy in devices with limited hardware is a challenge, due to both memory requirements and a large number of floating point operations per second (FLOPs).
Many empirical observations have also demonstrated that the test accuracy of DNNs is not affected drastically by distorting their trained weights and even removing the hidden units~\cite{cheng2017survey, huang2018condensenet}. 
Balancing the trade-off between the size of a deep model and achieving high performance has recently fueled research activity in developing various model compression techniques, and quantifying the notion of redundancy in DNNs.

Among existing algorithms, 
a higher compression ratio. In general, pruning techniques are divided into two categories: Structured and Unstructured methods. Unstructured methods directly remove individual weights in a deep model, 
while structured methods apply pruning in the level of filters and layers by imposing some structure on the topology of weights. Unstructured methods provide a higher compression ratio and accuracy in terms of memory requirement, but less gain in the FLOPs number; however, structured methods are much easier to be deployed in parallel processors, and usually provide a significant reduction in the FLOPs count.
In this work, we study structured pruning techniques for deep networks with skip-connections e.g., ResNet~\cite{he2016deep} and DenseNet~\cite{huang2017densely} by finding a set of informative hidden units. We cast the pruning problem as a feature selection problem
These architectures are very robust against removing a set of layers referred to here as \emph{skip-units} (please see Section 3), making them ideal candidates for pruning. In order to select the most informative feature maps, we need to measure the dependency between these features and the output of the model. Measuring the dependency between random variables is an important problem in statistics, information theory, and machine learning~\cite{poczos2012copula}, which can be used to determine the information of one random variable about the other one. The most well-known dependence measure is the Shannon mutual information. Other measures include  Maximum Mean Discrepancy (MMD)~\cite{borgwardt2006integrating}, and Energy Dependence (Distance-based Correlation)~\cite{szekely2007measuring}, to name a few. 


In this paper, we use Energy Dependence as a model-free measure between a set of hidden layers and the output of a DNN with skip-connections as a feature selection criterion in order to assess the importance and relevance of the feature maps. Using this information, we propose a structured pruning algorithm to iteratively remove a set of less important redundant layers from the network. 

\vspace{-.2cm}
\section{Related Work}
\label{prior}
\vspace{-.2cm}
Here, we only review the  pruning techniques most relevant to our method. 
Among existing approaches in network compression, pruning techniques are quite popular~\cite{ xiao2019autoprune, zhuang2018discrimination, lin2018accelerating, malach2020proving, lin2020hrank,herrmann2020channel,ganesh2020mint}. These methods reduce the network complexity through weights/kernels removal. 

Pruning through the removal of weights is accomplished by determining and removing less important weights in the final prediction of the model. However, finding the less important weights is a prohibitively costly problem due to its combinatorial nature. To get around this issue, various criteria such as the minimum norm of weights, the activation of the feature maps, information gain, and etc have been proposed in literate~\cite{molchanov2016pruning}. However, these approaches often either ignore the dependency between layers or solve a more difficult optimization objective than the original loss. Evaluating the importance of a filter just from its weights is not a reliable measure as dependency between filters is not taken into account. In addition, higher weight values do not necessarily mean the true importance of a filter since the filter’s contribution can be compensated elsewhere in the network~\cite{ganesh2020mint}. 

The most relevant pruning technique with our approach is the structured method of dynamic path selection. In this method, a sub-graph or a group of weights with a specific structure is selected from the original network such that the chosen graph has a comparable inference (test) performance with the original model. The common practice is to train an additional module which selects where to forward between channels, blocks,
or other parts of the graph~\cite{liu2018dynamic, hu2018squeeze, wu2018blockdrop, wang2018skipnet, lin2020hrank, lee2020urnet}. In particular, approaches proposed by~\cite{wang2018skipnet} and \cite{wu2018blockdrop} have mostly focused on the residual networks, and their goal is to dynamically select a set of layers using reinforcement learning. however, they are less effective with deeper neural networks. In addition, the additional module may increase the complexity and running time of the whole model. 


\section{Background and Proposed Method}
\vspace{-.2cm}
\subsection{Deep Neural Models with Skip-Connection}
\label{skip_conn}
\vspace{-.2cm}
As mentioned, architectures with skip-connections are more robust to weight distortion than the feed-forward networks~\cite{veit2016residual}. Here, we take this phenomenon one step further and develop a structured pruning technique based on removing a collection of redundant layers instead of individual random layers. First, we define a \emph{skip-unit} as a set of layers where its output is a function of sequential application of operations in the unit including \textit{Conv}, \textit{Pooling}, \textit{ReLU}, \textit{BN}, \textit{Dropout}, etc and feature maps of previous unit(s). Specifically, consider a DNN with $L$ skip-units. We denote the input of $l^{th}$ unit as $U_{l-1}$. Let $T_l = f_l(U_{l-1})$ denote the output of sequential application of the aforementioned operations in a skip-unit, summarized by $f_l$. We call $T_l$ as the feature map of the unit $l$. Hence, each skip-unit is mathematically expressed by:
$
U_l = \Psi(T_l, U_{l:l-1}, \alpha_l), \ \  l=2,3,\ldots,L,
$
where $\alpha_l\in\{0,1\}$ is referred to as  \emph{policy} for unit $l$, and $U_{1:l-1}$ denotes all the skip-units from unit $1$ to the $(l-1)^{th}$ unit.  $\Psi$ also denotes an operation that combines $T_l$ and $U_{1:l-1}$. In ResNet and DenseNet models, $\Psi_{res}$ and $\Psi_{den}$ are respectively given by
\begin{align}
&U_l = \Psi_{res}(T_l, U_{1:l-1}, \alpha_l) = \alpha_l T_l + \mathcal{A}_{l-1}U_{l-1}, \\
&\hspace{0cm}U_l = \Psi_{dense}(T_l, U_{1:l-1}, \alpha_l) = Con(\alpha_l T_l, \ U_{1:l-1}), 
\end{align}
where $Con$ is a concatenation operation, and $\mathcal{A}_{l-1}$ is an identity, down-sampling, or some operation such as convolution operator(s). Furthermore, $U_0$, the input for the first unit is usually given by a convolution operator (e.g., in ResNet, and DenseNet). 
 \vspace{-.8cm}
\subsection{Model-Free Measurement of Dependency}
\vspace{-.2cm}
Our main idea for compression of deep models with skip-units is based on selecting the most informative hidden units by measuring their statistical dependency to the output of the model and then removing less important units. 
A popular information measure is Shannon's \emph{Mutual Information} (MI). Computing the mutual information between the output and the features of a machine learning model has been a research topic with a long history in unsupervised feature learning~\cite{linsker1988self}. In the context of deep learning, MI between the hidden layers and the output of a network has been investigated by the \emph{Information Bottleneck} theory~\cite{tishby2015deep}. While Shannon MI is an appealing mathematical measure, its estimation may be a difficult task. The estimation of MI for high-dimensional data needs a model assumption for the underlying probability distribution. The selected model may be statistically unrealistic. 
Here, we use a model-free measure with similar properties of MI that is not computationally intense~\cite{bottou2018geometrical}. We introduce a measure that quantifies the dependency between two random vectors $T$ and $Y$, a feature map and output labels, respectively. Our proposed measure is based on the Energy Distance between two random vectors.
\vspace{-.2cm}
\begin{definition}[Energy Distance (ED)~\cite{szekely2013energy}]:
    Suppose that the characteristic functions and distribution functions of two random vectors $T_i\in \mathbb{R}^d$ with finite first moment (i.e., $E||T_i||<\infty$) are respectively given by $\phi_i(\cdot)$ and $F_i(\cdot)$ ($i=1,2$). The ED between $F_1$ and $F_2$ is given by
    $
        \mathcal{E}(F_1,F_2) = \frac{1}{c_d}\int_{\mathbb{R}^d} \frac{|\phi_1(s)-\phi_2(s)|^2}{\|s\|^{d+1}}ds
        \label{eq11},
    $
    where  $c_d=\pi^{(1+d)/2}/\Gamma((1+d)/2)$ only depends on $d$.
    \vspace{-.2cm}
\end{definition}
\begin{theorem}\label{them1}
    The energy distance in~\eqref{eq11} can be written as~\cite{szekely2013energy}
    \small
    \begin{align}
        \hspace{-.19cm}\mathcal{E}(F_1,F_2)
        &=
        2E \|T_1-T_2\| - E \|T_1-T_1' \| - E \|T_2-T_2'\|, \label{eq10}
    \end{align}
    \normalsize
    where $T_1'$ and $T_2'$ are i.i.d copies of $T_1$ and $T_2$, respectively, and $\|\cdot\|$ denotes the Euclidean norm. Energy Distance is non-negative and $\mathcal{E}(F_1,F_2)=0$ iff $F_1=F_2$.
\end{theorem}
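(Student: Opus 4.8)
The plan is to bridge the frequency-domain integral of Definition~1 and the spatial-domain expression~\eqref{eq10} through a single analytic identity, the \emph{Lévy-type integral formula}
\[
\|x\| = \frac{1}{c_d}\int_{\mathbb{R}^d}\frac{1-\cos\langle s,x\rangle}{\|s\|^{d+1}}\,ds, \qquad x\in\mathbb{R}^d,
\]
with the very same normalizing constant $c_d=\pi^{(1+d)/2}/\Gamma((1+d)/2)$. First I would rewrite the numerator in Definition~1 as a combination of expected cosines. Writing $\phi_j(s)=E[e^{i\langle s,T_j\rangle}]$ and expanding $|\phi_1-\phi_2|^2=|\phi_1|^2+|\phi_2|^2-2\,\mathrm{Re}(\phi_1\overline{\phi_2})$, and using that for independent copies $\phi_j(s)\overline{\phi_k(s)}=E[e^{i\langle s,\,T_j-T_k'\rangle}]$, I obtain
\[
|\phi_1(s)-\phi_2(s)|^2 = E\cos\langle s,T_1-T_1'\rangle + E\cos\langle s,T_2-T_2'\rangle - 2E\cos\langle s,T_1-T_2\rangle,
\]
where the imaginary parts drop out because $T_j-T_j'$ is symmetric and the cross term is taken in real part.

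The second step is a symmetrization that renders each piece individually integrable against $\|s\|^{-(d+1)}$. Since the coefficients $1+1-2$ sum to zero, I may replace each $\cos\langle s,\cdot\rangle$ by $-(1-\cos\langle s,\cdot\rangle)$ without altering the value, giving
\[
|\phi_1-\phi_2|^2 = 2E\bigl(1-\cos\langle s,T_1-T_2\rangle\bigr) - E\bigl(1-\cos\langle s,T_1-T_1'\rangle\bigr) - E\bigl(1-\cos\langle s,T_2-T_2'\rangle\bigr).
\]
Each term $1-\cos\langle s,x\rangle$ is non-negative, behaves like $\tfrac12\langle s,x\rangle^2=O(\|s\|^2)$ near the origin, and stays bounded at infinity, so the corresponding integrand is $O(\|s\|^{1-d})$ near $0$ and $O(\|s\|^{-(d+1)})$ at infinity, hence integrable. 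Because every term is non-negative, Tonelli's theorem justifies interchanging each expectation with the $s$-integral.

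Dividing by $c_d$, integrating term by term, and invoking the Lévy formula on the inner integrals then collapses the expression directly to $2E\|T_1-T_2\|-E\|T_1-T_1'\|-E\|T_2-T_2'\|$, which is~\eqref{eq10}; the finite-first-moment hypothesis $E\|T_i\|<\infty$ ensures all three expectations are finite. I expect the genuine obstacle to be the Lévy formula itself: by rotational invariance one reduces to $x=\|x\|e_1$, the scaling $s\mapsto s/\|x\|$ factors out a single power of $\|x\|$, and what remains is the purely dimensional integral $\int_{\mathbb{R}^d}(1-\cos u_1)\,\|u\|^{-(d+1)}\,du$, whose exact evaluation to $c_d$ (via spherical coordinates and a Gamma-function identity) is the delicate computation carried out in~\cite{szekely2013energy}.

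Finally, the stated properties follow directly from the integral form in Definition~1. Non-negativity is immediate since the integrand is a quotient of non-negative quantities. For the equality case, $\mathcal{E}(F_1,F_2)=0$ forces $|\phi_1(s)-\phi_2(s)|^2=0$ for almost every $s$, and hence for all $s$ by continuity of characteristic functions; the uniqueness theorem for characteristic functions then yields $F_1=F_2$, while the converse direction is trivial.
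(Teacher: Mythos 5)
The paper itself offers no proof of this theorem --- it defers entirely to the cited reference \cite{szekely2013energy} --- and your argument is precisely the standard proof given there: expand $|\phi_1(s)-\phi_2(s)|^2$ into expected cosines of $T_1-T_1'$, $T_2-T_2'$, $T_1-T_2$, exploit that the coefficients $1+1-2$ sum to zero to rewrite everything in terms of the non-negative kernels $1-\cos\langle s,\cdot\rangle$, and integrate term by term against $\|s\|^{-(d+1)}/c_d$ using the L\'evy-type formula, with the zero-distance characterization following from continuity and uniqueness of characteristic functions. One minor point of rigor: your claim that the expected integrand is $O(\|s\|^2)$ near the origin would require second moments, which are not assumed, but this is harmless --- Tonelli (needing only non-negativity) lets you exchange the expectation and the $s$-integral first, after which the inner integral evaluates exactly to $c_d\|x\|$ and all three resulting expectations are finite under the stated first-moment hypothesis.
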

\vspace{-.2cm}
Now suppose that we have $n_i$ observations of $T_i$, $i=1,2$, where $t_{i,j}$ denotes the $j$-th observation of  $T_i$. Then an unbiased  estimator of ED in (\ref{eq10}) is given by:
\begin{align}
    \hat{\mathcal{E}}(F_1,F_2)
    &=
        \frac{2}{n_1n_2} \sum_{1\leq j_1\leq n_1,1\leq j_2\leq n_2}\|t_{1,j_1}-t_{2,j_2}\| -  \frac{1}{n_1^2}\sum_{1\leq j_1,j_1'\leq n_1} \|t_{1,j_1}-t_{1,j_1'}\|\nonumber\\
    &\hspace{4cm} -\frac{1}{n_2^2}\sum_{1\leq j_2,j_2'\leq n_2} \|t_{2,j_2}-t_{2,j_2'}\|.\label{eq14}
\end{align}
Applying the theory of V-statistics~\cite{lee2019u}, it can be proved under mild assumptions that the above $\hat{\mathcal{E}}(F_1,F_2)$ is a consistent estimator of $\mathcal{E}(F_1,F_2)$, that is 
$
\hat{\mathcal{E}}(F_1,F_2) \rightarrow \mathcal{E}(F_1,F_2)
$
in probability as $n=\min\{n_1,n_2\} \rightarrow \infty$. This implies that with sufficiently large data size, $\hat{\mathcal{E}}(F_1,F_2)$ vanishes for $T_1$ identically distributed as $T_2$, and is bounded away from zero otherwise, without the need of specifying any parametric model for the underlying probability distribution. We now define the dependency.
\vspace{-.3cm}

\begin{definition}[Energy Dependence]\label{definition1}
    Consider random vectors/variables $T \in \mathbb{R}^d$ and $Y \in \mathcal{Y}$, where 
    $\mathcal{Y}$ has $p$ elements, say $\{1,2,\ldots,p\}$ for the notational convenience. The Energy Dependence between $T$ and $Y$ is defined by $
       D(T,Y) = \max_{1\leq i,j \leq p} \mathcal{E}(F_i,F_j),$ where $F_i$ denotes the  distribution of $T$  conditional on $Y=i$.
       \vspace{-.3cm}
\end{definition}
\begin{theorem}
    For random vector/variable $T \in \mathbb{R}^d$ and $Y \in \mathcal{Y}$ with a finite alphabet  $\mathcal{Y}$, $D(T,Y)=0$ if $T$ and $Y$ are independent. 
    \vspace{-.3cm}
\end{theorem}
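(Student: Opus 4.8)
The plan is to reduce the statement to the equality case of Theorem~\ref{them1}, which asserts that $\mathcal{E}(F_i,F_j)=0$ precisely when $F_i=F_j$. Since $D(T,Y)$ is by definition the maximum of the non-negative quantities $\mathcal{E}(F_i,F_j)$ over all label pairs, it is enough to argue that independence of $T$ and $Y$ collapses all the conditional laws $F_i$ to a single common distribution, after which every term in the maximum vanishes.

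First I would rewrite the independence hypothesis in terms of the conditional distributions appearing in Definition~\ref{definition1}. For any Borel set $B\subseteq\mathbb{R}^d$ and any label $i$ with $P(Y=i)>0$, the conditional law satisfies $F_i(B)=P(T\in B,\,Y=i)/P(Y=i)$. Independence factorizes the joint probability, so the right-hand side reduces to the marginal $P(T\in B)$ for every $B$ and every such $i$. Hence each $F_i$ equals the marginal distribution of $T$ and, in particular, $F_i=F_j$ for all $1\leq i,j\leq p$.

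The conclusion is then immediate: applying the equality characterization of Theorem~\ref{them1} to each pair gives $\mathcal{E}(F_i,F_j)=0$, so that $D(T,Y)=\max_{1\leq i,j\leq p}\mathcal{E}(F_i,F_j)=0$.

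I do not expect a genuine analytic obstacle here, since the result is a direct corollary of the definitions together with the ``iff'' half of Theorem~\ref{them1}. The only care needed is bookkeeping: one should restrict to labels with $P(Y=i)>0$ so that every $F_i$ is well defined, and verify that the finite-first-moment condition $E\|T\|<\infty$ transfers to each conditional law $F_i$, which guarantees that $\mathcal{E}(F_i,F_j)$ is finite and that the equality criterion of Theorem~\ref{them1} is applicable.
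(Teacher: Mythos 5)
Your proposal is correct and takes essentially the same route as the paper: independence collapses all conditional laws $F_i$ to the marginal distribution of $T$, the equality case of Theorem~\ref{them1} then gives $\mathcal{E}(F_i,F_j)=0$ for every pair, and hence the maximum $D(T,Y)$ vanishes. The paper states this in two lines; your additional bookkeeping (restricting to labels with $P(Y=i)>0$ and checking that the finite-first-moment condition passes to the conditional laws) only makes the same argument more careful.
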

\begin{proof}
    The independence of $T$ and $Y$ is equivalent to $F_{T \mid Y=j} = F_{T \mid Y=j'}$ for any $1\leq j,j'\leq p$.
    This is further equivalent to $\mathcal{E}(F_i,F_j)=0$, which implies $D(T,Y)=0$, according to Definition~\ref{definition1} and Theorem~\ref{them1}. 
    \vspace{-.3cm}
\end{proof}
The larger the Energy Distance, the larger is the  dissimilarity. As a consequence, the Energy Dependence in Definition~\ref{definition1} may be interpreted as a quantification of the dependency between $T$ and $Y$. Based on (\ref{eq14}), we define a consistent estimator of $D(T,Y)$ by
$\hat{D}(T,Y) = \max_{1\leq i,j \leq p} \hat{\mathcal{E}}(F_i,F_j),$ where $\hat{\mathcal{E}}(F_i,F_j)$ is given by \eqref{eq14}. We use this to measure the statistical dependency between $T$ and $Y$. The larger its value, the more information $T$ reveals about $Y$. Similar to the consistency of (\ref{eq14}), $\hat{D}(T,Y)$ is a consistent estimator of $D(T,Y)$ according to the standard theory of V-statistics. 
\vspace{-.8cm}
\subsection{Proposed Algorithm}
\label{Alg}
\vspace{-.2cm}
The general idea is to measure the relevance and importance of different skip-units to the output of the network, and then select the less important ones. If we show the Energy Dependence between the unit $T_l$ and the output of the network, $Y$ by $D(T_l,Y), \ l=1,2,\dots,L$, ideally, we need to  to select a set $S^*\subseteq\{T_1, T_2,\dots,T_L\}$ of informative units, maximizing the average dependencies:
    $S^* = \underset{S}{\mathrm{argmax}} \frac{1}{|S|}\sum_{T_l\in S} D(T_l,Y),$
where $|S|$ demotes the size of the set $S$. However, the solution of this optimization problem is a set with highly redundant units which does not take into account the dependency between the units. In addition, due to the fact that the existence of the skip-units in ResNet and DenseNet architectures, the Markov Chain property among the layers is violated. This in turn implies that the mutual information between the skip-units and the output does not decrease monotonically according to the information processing inequality. As a result, selecting units with the largest energy values (information) may remove some important skip-units. We will show an experiment to verify this argument in below. Hence, we need to solve alternatively the following optimization problem to simultaneously maximize the relevance of units and minimizing the redundancy among them:
\begin{align*}
    S^* = \underset{S}{\mathrm{argmax}} \frac{1}{|S|}\sum_{T_l\in S} D(T_l,Y)-\frac{1}{|S|^2}\sum_{T_l,T_{l'}\in S} D(T_l,T_{l'}).
\end{align*}
However, solving the above problem is computationally expensive if it is not possible due to its combinatorial nature; as a result, we propose our algorithm which we call \emph{Pruning with Energy Dependence} (PED). The pseudocode of our algorithm is given in Algorithm~\ref{alg1}. The idea of PED can be summarized by 3 steps: (I) Clustering the Energy Dependence of units, (II) Selecting the most informative unit as a cluster head in each cluster with the largest Energy Dependence value, and (III) Removing other units from each cluster in order to prune the redundant units. In  Algorithm~\ref{alg1} $|A|$ is the size of a set $A$ and $S^t$ is the index set of active skip-units at stage $t$, and  $\mathbf{DNN}^t$ means a trained pruned model with weights initialized from stage $t-1$. 


In each stage $t$, PED determines the units whose feature maps $T_l$ are more informative about the output of the model for the given input by computing $\hat{D}(T_l,Y)$, $l=1,2,\ldots,L$, defined in Definition~\ref{definition1}. Next, PED uses $\hat{D}(T_l, Y)$ to select active skip-units for which $\alpha_l=1$ by clustering the $\hat{D}(T_l, Y)$ values. For clustering purpose, we have used an optimal $k$-means algorithm based on dynamic programming~\cite{wang2011ckmeans} (by invoking $\mathrm{Clust}(K^t, \mathbf{D}^t, S^t)$ sub-routine)\footnote{Other clustering algorithms can be used here; however, since the clustering is applied on 1-D data, using the approach by~\cite{wang2011ckmeans} guarantees optimality of clustering.}. 
\begin{algorithm}[t]
\begin{algorithmic}
\caption{Pruning with Energy Dependence (PED)}
\label{alg1}
\small{
\STATE \textbf{INPUT:}
\STATE \ \ \ \ \ $\mathbf{DNN}^0$:  Pre-trained Deep Neural Network
\STATE \ \ \ \ \ $S^0$: The index set  of skip-units in $\mathbf{DNN}^{0}$
\STATE \ \ \ \ \ $T_l^0$: Feature maps at $t=0$, \ $l=1,2,\dots,|S^0|$
\STATE \ \ \ \ \ $K^0$: Number of Clusters at $t=0$
\STATE \ \ \ \ \ $N$: Number of stages
\FOR {$t =0,1,\ldots,N-1$}
\STATE Compute $\hat{D}(T_l^t, Y), \ \ l=1,\dots,|S^t| \ $ using $\mathbf{DNN}^{t}$   
\STATE Construct $\mathbf{D}^t = [\hat{D}(T_1^t, Y), ,\ldots,\hat{D}(T_{|S^t|}^t, Y)]$ 
\STATE 
\vspace{-.6cm}
\begin{align*}
&\hspace{-.40cm}\{Cluster_1,\ldots,Cluster_{K^t}\} = \mathrm{Clust}(K^t, \mathbf{D}^t, S^t)
\end{align*}
\vspace{-.7cm}
\FOR {$j = 1,2,\ldots,K^t$}
\STATE \ \ \ \  \ \ \ $a_c = 1, \ \ \ c = \mathrm{centroid \ index \ of \ j^{th} \ cluster}$ 
\STATE \ \ \ \ \ \ \ $a_u = 0, \ \  \forall u\in Cluster_j\setminus c$
\ENDFOR
\STATE Update $S^t$ using only $K^t$ units and remove the rest
\STATE Update $\mathbf{DNN}^{t}$ by re-training the model with $K^t$ units 
\STATE $K^t=|S^t|-1$
\ENDFOR
\STATE Return pruned model with $|S^{N-1}|$ active skip-units}
\end{algorithmic}
\end{algorithm}
Next, PED only keeps $K^t$ units with the largest ED values and removes all other skip-units 
At the end of stage $t$, PED re-trains the new compressed model with weights in the active units and initialized with their values from the previous stage. This process continues until the desired size of the compressed network is met. For choosing the number of clusters ($K^t$) in each stage, while simply trying all the possible values from $1$ to $|S^t|-1$ is an acceptable approach (In ResNet or DenseNet, $|S^t|$ is at most $60$, and it decreases after each stage), one can start with a coarse pruning in the early stages and  shifts gradually to the finer pruning in the later stages.
\begin{figure}
\centering
    \begin{tabular}{cccc}
        \includegraphics[width=0.3\linewidth]{./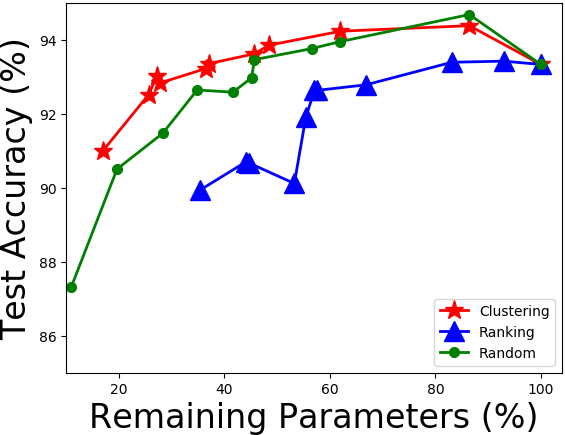} & & &
        \includegraphics[width=0.3\linewidth]{./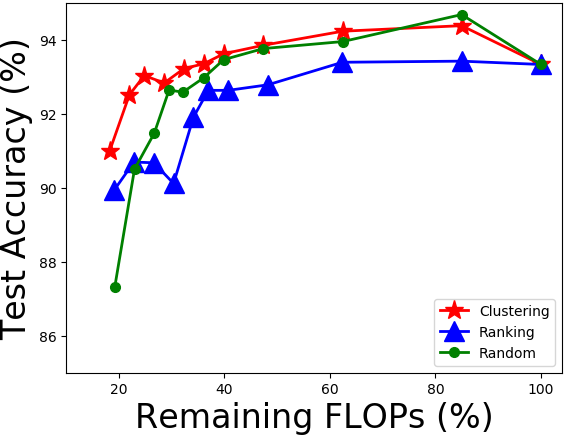}\\
        (a) & & & (b) 
    \end{tabular}
    \vspace{-.4cm}
    \caption{Comparison of clustering scheme in PED  against pruning with largest energy values, and random selection. The experiments are based on the ResNet56 and CIFAR-10.}
    \label{ranking_vs_clusterin}
    \vspace{-.2cm}
\end{figure}
An important question here is that what if we choose the units with the largest energy values, or selecting them randomly instead of clustering approach. 
We first note that the model-free measure of information is a random variable which depends on the data samples. When these values are close to each other, a test of hypothesis fails to reject if an information value is larger than the other. Thus, we cluster the values that cannot be distinguished from each other statistically. To support the preference of clustering over two other schemes, we have compared pruning using clustering with the two above schemes. Plots (a) and (b) of Figure~\ref{ranking_vs_clusterin} show the comparison for pruning ResNet56 model in classification of CIFAR-10 data set. To be fair in comparison with the clustering scheme used in PED, we have used the same number of units used in the clustering for removing units randomly and with the smallest energy. As we can see in Figure~\ref{ranking_vs_clusterin}, clustering approach results in better test accuracy both with respect to the percentage of remaining parameters, and the percentage of remaining FLOPs compared to the pruning methods with the largest Energy Dependence  and random selection specifically for the later stages.

\begin{table*}[t]
    \centering
    \begin{tabular}{c|ccccc}
    \toprule
    Models &  Acc. & Par. & Red.(\%) & FLOPs & Red.(\%)\\
    \hline
        ResNet32 (SNIP)~\cite{lee2018snip}\textsuperscript{U} & 0.9259 & 0.19 & 90.00 & - & - \\
        ResNet56 (PFEC-B)~\cite{li2016pruning}\textsuperscript{S} & 0.9306 &  0.73 & 13.70  & 90.90 & 27.60\\ 
        ResNet56 (SFP)~\cite{he2018soft}\textsuperscript{S} & 0.9359 & - & - & 59.40 & 52.60 \\ 
        ResNet56 (CNN-FCF)~\cite{li2019compressing}\textsuperscript{S} & 0.9338 & - & 43.09 & 72.40 & 42.78 \\
        ResNet56 (FPGM-mix 40\%)~\cite{he2019filter}\textsuperscript{S} & 0.9359 & - & - & 59.40 & 52.60 \\
        ResNet56 (HRank)~\cite{lin2020hrank}\textsuperscript{S} & 0.9317 & 0.49 & 42.40 &62.72 & 50.00 \\
    \hline
         ResNet56  \textbf{(ours)}&0.9336 & 0.32 & 62.96 & 45.73 & 63.86 \\
    \hline
        ResNet110 (BlockDrop)~\cite{wu2018blockdrop}\textsuperscript{S} & 0.9360 & - & - & 173.00 & 65.00\\
        ResNet110 (SkipNet)~\cite{wang2018skipnet}\textsuperscript{S} & 0.9330 & - & - & 126.00 & 50.47\\
        ResNet164-pruned~\cite{liu2017learning}\textsuperscript{S} & 0.9492 & 1.44 & 14.90 & 381.00 & 23.70\\
    \hline
         ResNet164-a  \textbf{(ours)} & 0.9519 & 0.65 & 61.78 & 71.06 & 72.13\\
         ResNet164-b \textbf{(ours)} & 0.9426 & 0.48 & 71.57 & 48.93 & 80.81\\
    \hline
    DenseNet40-pruned~\cite{liu2017learning}\textsuperscript{S} & 0.9481 & 0.66 & 35.70 & 381.00 & 28.40\\
    IGC-V2\textsuperscript{*}C416~\cite{xie2018interleaved}\textsuperscript{S} & 0.9451 & 0.65 & - & - & - \\
    CondenseNet86~\cite{huang2018condensenet}\textsuperscript{S} & 0.9496 & 0.52 & - & 65.00 & -\\
    \hline
     DenseNet100-k12-a  \textbf{(ours)} & 0.9432 & 0.29 & 61.98 & 117.73 & 59.89\\
     DenseNet100-k12-b  \textbf{(ours)} & 0.9425 & 0.27 & 64.50 & 116.51 & 60.31\\
    \bottomrule
    \end{tabular}
    \caption{Classification test accuracy, number of parameters (Par), and FLOPs on CIFAR-10 between PED and those of the state-of-the-art methods. Par and FLOPs are in million.} 
    \label{tab1:my_label}
    \vspace{-.3cm}
\end{table*}

\begin{table*}
    \centering
    \begin{tabular}{c|ccccc}
    \toprule
    Models &  Acc. & Par. & Red.(\%) & FLOPs & Red.(\%)\\
    \hline
        ResNet32 (SET)~\cite{mocanu2018scalable}\textsuperscript{U} & 0.6966 & 0.19 & 90.00 & - & - \\
        ResNet32 (GraSP)~\cite{Wang2020Picking}\textsuperscript{U} & 0.6924 & 0.19 & 90.00 & - & - \\
        Resnet110 (BlockDrop)~\cite{wu2018blockdrop}\textsuperscript{S} & 0.7370 & - & - & $\sim$284.00 & $\sim$56.00\\ 
        ResNet110 (SkipNet)~\cite{wang2018skipnet}\textsuperscript{S} & 0.7250 & - & - & - & 37.00 \\ 
        ResNet164-pruned~\cite{liu2017learning}\textsuperscript{S} & 0.7713 & 1.46 & 15.50 & 333.00 & 33.30 \\ 
    \hline
         ResNet164-a  \textbf{(ours)} & 0.7499 & 0.58 & 67.70 & 100.09 & 60.74\\
         ResNet164-b  \textbf{(ours)} & 0.7402 & 0.47 & 72.80 & 91.01 & 64.30\\
     \hline
         DenseNet40-pruned~\cite{liu2017learning}\textsuperscript{S}  & 0.7472 & 0.66 & 37.50 & 371.00 & 30.30 \\
        CondenseNet86~\cite{huang2018condensenet}\textsuperscript{S} & 0.7636 & 0.52 & - & 65.00 & -\\
     \hline
         DenseNet100-k12-a  \textbf{(ours)} & 0.7526 & 0.51 & 36.64 & 239.27 & 21.32\\
         DenseNet100-k12-b  \textbf{(ours)} & 0.7488 & 0.47 & 40.78 & 221.89 & 27.04\\
    \bottomrule
    \end{tabular}
    \caption{The top-1 test accuracy, number of parameters (Par), and FLOPs on CIFAR-100 between PED and those of the state-of-the-art methods. Par and FLOPs are in million. ``$\sim$" means approximate value. The accuracy reported in~\cite{Wang2020Picking} is given by $0. 6924\pm 0.24$.} 
    \label{tab2:my_label}
\end{table*}


\begin{table*}[!t]
\centering
\begin{tabular}{c|cccccc}
\toprule
 Model (ResNet50) &  Top1 Acc. & Top5 Acc. & Par. & Red.(\%) & FLOPs & Red.(\%)\\
\hline
  Channel Prune~\cite{he2017channel}\textsuperscript{S} & 0.7230 & 0.9080 & - & - & 5.22 & 28.00\\
  SkipNet~\cite{wang2018skipnet}\textsuperscript{S} & 0.7200 & - & - & - & -&12.00 \\ 
  SSS~\cite{huang2018data}\textsuperscript{S} & 0.7182 & 0.9079 & 15.60 & 38.82 & 2.33 & 43.32\\
  GAL-0.5~\cite{lin2019towards}\textsuperscript{S} & 0.7180 & 0.9082 & 19.31 & 24.74 & 1.84 & 55.01 \\
  HRank~\cite{lin2020hrank}\textsuperscript{S} &  0.7198 & 0.9101 & 13.77 & 46.95 & 1.55 & 62.10 \\ 
 \hline
  PED \textbf{(ours)} & 0.7280  & 0.9094 & 12.40 & 51.49 & 2.03 & 50.53 \\
\bottomrule
\end{tabular}
\caption{The top-1 and top-5 test accuracy, number of parameters (Par), and FLOPs on ImageNet data set between PED and those of the state-of-the-art methods.}
\vspace{-.4cm}
\label{ImageNet_res50main}
\end{table*}


\vspace{-.4cm}
\section{Experimental Results}
\label{experiment}
\vspace{-.2cm}
In this section, we present the performance of PED on the classification task of CIFAR-10/100~\cite{krizhevsky2010convolutional}, and ImageNet~\cite{deng2009imagenet} data sets. For the ImageNet data set, we have used the common data-augmentation scheme at training time~\cite{he2016deep}, and perform a re-scaling to $256\times256$ followed by a $224\times224$ center crop at test time.
We have compared the performance of PED with the state-of-the-art methods for CIFAR-10, CIFAR-100, and ImageNet datsets respectively in Table~\ref{tab1:my_label}, Table~\ref{tab2:my_label}, and Table~\ref{ImageNet_res50main}. The third/fourth and fifth/sixth columns labeled 'Red' in all tables represent the percentage of reduction in the number of parameters and FLOPs, respectively. Also, the numbers given in columns labeled as "Par." (i.e., Parameters) and "FLOPs" are in million (Only FLOPs in Table~\ref{ImageNet_res50main} is in Billion) and rounded by two-decimal digits. In all tables, ``–" means no reported value, and the superscript ``S" and ``U" denote the structured and unstructured methods, respectively. 

The ResNet56 architecture consists of $56$ layers with $27$ residual units and the total number of $0.85$ (M) training parameters and $126.54$ (M) FLOPs. In order to run PED on ResNet56, we first train it on CIFAR-10 to achieve $0.9334$ test accuracy. The results of applying PED on ResNet56 with $N=5$ stages are shown in Table~\ref{tab1:my_label}. As we can see, without almost any dropping in the test accuracy, we can reduce the number of trainable parameters and FLOPs by $62.96\%$ and $63.86\%$, respectively. We next consider the ResNet164 which consists of $18$ units with total number of $1.70$ (M) trainable parameters and $254.94$ (M) FLOPs. Similar to the ResNet56, we first train this model on the CIFAR-10 and achieve $0.9569$ test accuracy. In Table~\ref{tab1:my_label}, we have listed two ResNet164 pruned models. The first one, ResNet164-a is a compressed version of the ResNet164 by running PED for $N=6$ stages, while ResNet164-b corresponds to $N=9$ stages. As we can see, ResNet164-a achieves $0.9519$ accuracy with $0.65$ (M) parameters and $71.06$ (M) FLOPs. For the CIFAR-100, a full ResNet164 achieves $77.93\%$ top-1 accuracy. Table~\ref{tab2:my_label} also presents ResNet164-a and ResNet164-b corresponding to $N=9$ and $N=11$ stages, respectively. 

Next we focus on the DenseNet100-k12 with $100$ layers, $k=12$  growth-rate (i.e., the number of output channels in each unit), and $48$ skip-units. We apply PED to DenseNet100-k12 by first training DenseNet100-k12 on both the CIFAR-10 with $0.9531$ test accuracy, $0.77$ (M) number of parameters, and $293.55$ (M) FLOPs (Tabel~\ref{tab1:my_label}) and the CIFAR-100 with $0.7793$ as the top-1 test accuracy, $0.80$ (M) number of parameters, and $304.10$ (M) FLOPs (Table~\ref{tab2:my_label}). DenseNet100-k12-a and DenseNet100-k12-b correspond to running PED with $N=10$ and $N=11$ stages, respectively. Also we have two models of DenseNet100-k12-a and DenseNet100-k12-b in Table~\ref{tab2:my_label} corresponding to running PED with $N=5$ and $N=6$ stages, respectively. These experiments suggest that our proposed algorithm is competitive to the state-of-the-art methods in one or more criteria of accuracy/parameters/FLOPs. Finally, Table~\ref{ImageNet_res50main} illustrates experiments on the ImagaNet data set using the ResNet50. The model consists of 4 blocks with 3, 4, 6, and 3 skip-units in each block. ResNet50 has $25.56$ (M) trainable parameters and $4.11$ (B) FLOPs. Compared to methods of SkipNet, SSS, GAL-0.5, and HRank, our method has better performance.

\vspace{-.2cm}
\Section{References}
\vspace{-.3cm}
\bibliographystyle{IEEEbib}
\bibliography{mrsbib}

\begin{thebibliography}{10}

\bibitem{deng2009imagenet}
J.~Deng, W.~Dong, R.~Socher, L.~Li, K.~Li, and L.~Fei-Fei,
\newblock ``Imagenet: A large-scale hierarchical image database,''
\newblock in {\em IEEE Conf. Comp. Vision. Patt. Recog.}, 2009, pp. 248--255.

\bibitem{lin2014microsoft}
T.~Lin, M.~Maire, S.~Belongie, J.~Hays, P.~Perona, D.~Ramanan, P.~Doll{\'a}r,
  and C.~Zitnick,
\newblock ``Microsoft coco: Common objects in context,''
\newblock in {\em European Conf. Comp. vision}, 2014, pp. 740--755.

\bibitem{krizhevsky2012imagenet}
A.~Krizhevsky, I.~Sutskever, and G.~Hinton,
\newblock ``Imagenet classification with deep convolutional neural networks,''
\newblock in {\em Adv. Neural Inf. Process. Sys.}, 2012.

\bibitem{simonyan2014very}
K.~Simonyan and A.~Zisserman,
\newblock ``Very deep convolutional networks for large-scale image
  recognition,''
\newblock {\em arXiv preprint arXiv:1409.1556}, 2014.

\bibitem{cheng2017survey}
Y.~Cheng, D.~Wang, P.~Zhou, and T.~Zhang,
\newblock ``A survey of model compression and acceleration for deep neural
  networks,''
\newblock {\em arXiv preprint arXiv:1710.09282}, 2017.

\bibitem{huang2018condensenet}
G.~Huang, S.~Liu, L.~Van~der Maaten, and K.~Weinberger,
\newblock ``Condensenet: An efficient densenet using learned group
  convolutions,''
\newblock in {\em IEEE Conf. Comp. Vision. Patt. Recog.}, 2018, pp. 2752--2761.

\bibitem{he2016deep}
K.~He, X.~Zhang, S.~Ren, and J.~Sun,
\newblock ``Deep residual learning for image recognition,''
\newblock in {\em IEEE Conf. Comp. Vision. Patt. Recog.}, 2016, pp. 770--778.

\bibitem{huang2017densely}
G.~Huang, Z.~Liu, L.~Van Der~Maaten, and K.~Weinberger,
\newblock ``Densely connected convolutional networks,''
\newblock in {\em IEEE Conf. Comp. Vision. Patt. Recog.}, 2017, pp. 4700--4708.

\bibitem{poczos2012copula}
B~Poczos, Z~Ghahramani, and J~Schneider,
\newblock ``Copula-based kernel dependency measures,''
\newblock in {\em Int. Conf. Machine Learning}, 2012.

\bibitem{borgwardt2006integrating}
K.~Borgwardt, A.~Gretton, Ma. Rasch, H.~Kriegel, B.~Sch{\"o}lkopf, and
  A.~Smola,
\newblock ``Integrating structured biological data by kernel maximum mean
  discrepancy,''
\newblock {\em Bioinformatics}, vol. 22, no. 14, pp. e49--e57, 2006.

\bibitem{szekely2007measuring}
G.~Sz{\'e}kely, M.~Rizzo, and N.~Bakirov,
\newblock ``Measuring and testing dependence by correlation of distances,''
\newblock {\em Annals of Statistics}, vol. 35, no. 6, pp. 2769--2794, 2007.

\bibitem{xiao2019autoprune}
X.~Xiao, Z.~Wang, and S.~Rajasekaran,
\newblock ``Autoprune: Automatic network pruning by regularizing auxiliary
  parameters,''
\newblock in {\em Adv. Neural Inf. Process. Sys.}, 2019.

\bibitem{zhuang2018discrimination}
Z.~Zhuang, M.~Tan, B.~Zhuang, J.~Liu, Y.~Guo, Q.~Wu, J.~Huang, and J.~Zhu,
\newblock ``Discrimination-aware channel pruning for deep neural networks,''
\newblock in {\em Adv. Neural Inf. Process. Sys.}, 2018, pp. 875--886.

\bibitem{lin2018accelerating}
S.~Lin, R.~Ji, Y.~Li, Y.~Wu, F.~Huang, and B.~Zhang,
\newblock ``Accelerating convolutional networks via global \& dynamic filter
  pruning.,''
\newblock in {\em IJCAI}, 2018, pp. 2425--2432.

\bibitem{malach2020proving}
E.~Malach, G.~Yehudai, S.~Shalev-Shwartz, and O.~Shamir,
\newblock ``Proving the lottery ticket hypothesis: Pruning is all you need,''
\newblock {\em arXiv preprint arXiv:2002.00585}, 2020.

\bibitem{lin2020hrank}
M.~Lin, R.~Ji, Y.~Wang, Y.~Zhang, B.~Zhang, Y.~Tian, and L.~Shao,
\newblock ``Hrank: Filter pruning using high-rank feature map,''
\newblock in {\em IEEE Conf. Comp. Vision. Patt. Recog.}, 2020, pp. 1529--1538.

\bibitem{herrmann2020channel}
C.~Herrmann, Richard~S. Bowen, and R.~Zabih,
\newblock ``Channel selection using gumbel softmax,''
\newblock in {\em Europ. Conf. Comp. Vision}, 2020, pp. 241--257.

\bibitem{ganesh2020mint}
M.~Ganesh, J.~Corso, and S.~Sekeh,
\newblock ``Mint: Deep network compression via mutual information-based neuron
  trimming,''
\newblock {\em arXiv preprint arXiv:2003.08472}, 2020.

\bibitem{molchanov2016pruning}
P.~Molchanov, S.~Tyree, T.~Karras, T.~Aila, and J.~Kautz,
\newblock ``Pruning convolutional neural networks for resource efficient
  inference,''
\newblock {\em Int. Conf. Learning. Rep..}, 2017.

\bibitem{liu2018dynamic}
L.~Liu and J.~Deng,
\newblock ``Dynamic deep neural networks: Optimizing accuracy-efficiency
  trade-offs by selective execution,''
\newblock in {\em PAAAI Conf. Artificial Intelligence}, 2018, vol.~32.

\bibitem{hu2018squeeze}
J.~Hu, L.~Shen, and G.~Sun,
\newblock ``Squeeze-and-excitation networks,''
\newblock in {\em IEEE Conf. Comp. Vision. Patt. Recog.}, 2018, pp. 7132--7141.

\bibitem{wu2018blockdrop}
Z.~Wu, T.~Nagarajan, A.~Kumar, S.~Rennie, L.~Davis, K.~Grauman, and R.~Feris,
\newblock ``Blockdrop: Dynamic inference paths in residual networks,''
\newblock in {\em IEEE Conf. Comp. Vision. Patt. Recog.}, 2018, pp. 8817--8826.

\bibitem{wang2018skipnet}
X.~Wang, F.~Yu, Z.~Dou, T.~Darrell, and J.~Gonzalez,
\newblock ``Skipnet: Learning dynamic routing in convolutional networks,''
\newblock in {\em Europ. Conf. Comp. Vision}, 2018, pp. 409--424.

\bibitem{lee2020urnet}
S.~Lee, S.~Chang, and N.~Kwak,
\newblock ``Urnet: User-resizable residual networks with conditional gating
  module,''
\newblock in {\em AAAI Conf. Artificial Intelligence}, 2020, vol.~34.

\bibitem{veit2016residual}
A.~Veit, M.~Wilber, and S.~Belongie,
\newblock ``Residual networks behave like ensembles of relatively shallow
  networks,''
\newblock in {\em Adv. Neural Inf. Process. Sys.}, 2016, pp. 550--558.

\bibitem{linsker1988self}
R~Linsker,
\newblock ``Self-organization in a perceptual network. computer,''
\newblock pp. 105--117, 1988.

\bibitem{tishby2015deep}
N.~Tishby and N.~Zaslavsky,
\newblock ``Deep learning and the information bottleneck principle,''
\newblock in {\em IEEE Inf. Theory. Workshop. (ITW)}, 2015, pp. 1--5.

\bibitem{bottou2018geometrical}
L.~Bottou, M.~Arjovsky, D.~Lopez-Paz, and M.~Oquab,
\newblock ``Geometrical insights for implicit generative modeling,''
\newblock in {\em Braverman Readings in Machine Learning. Key Ideas from
  Inception to Current State}, pp. 229--268. Springer, 2018.

\bibitem{szekely2013energy}
G.~Sz{\'e}kely and M.~Rizzo,
\newblock ``Energy statistics: A class of statistics based on distances,''
\newblock {\em Journal of stat. Plan. Inference}, vol. 143, no. 8, pp.
  1249--1272, 2013.

\bibitem{lee2019u}
A.~Lee,
\newblock {\em U-statistics: Theory and Practice},
\newblock Routledge, 2019.

\bibitem{wang2011ckmeans}
H.~Wang and M.~Song,
\newblock ``Ckmeans. 1d. dp: Optimal k-means clustering in one dimension by
  dynamic programming,''
\newblock 2011, vol.~3, p.~29.

\bibitem{lee2018snip}
N.~Lee, T.~Ajanthan, and P.~Torr,
\newblock ``{SNIP}: Single shot network pruning based on connection
  sensitivity,''
\newblock in {\em Int. Conf. Learning. Rep.}, 2019.

\bibitem{li2016pruning}
H.~Li, A.~Kadav, I.~Durdanovic, H.~Samet, and H.~Graf,
\newblock ``Pruning filters for efficient convnets,''
\newblock in {\em Int. Conf. Learning. Rep.}, 2017.

\bibitem{he2018soft}
Y.~He, G.~Kang, X.~Dong, Y.~Fu, and Y.~Yang,
\newblock ``Soft filter pruning for accelerating deep convolutional neural
  networks,''
\newblock in {\em Int. Joint Conf. Artificial Intelligence}, 2018.

\bibitem{li2019compressing}
T.~Li, B.~Wu, Y.~Yang, Y.~Fan, Y.~Zhang, and W.~Liu,
\newblock ``Compressing convolutional neural networks via factorized
  convolutional filters,''
\newblock in {\em IEEE Conf. Comp. Vision. Patt. Recog.}, 2019, pp. 3977--3986.

\bibitem{he2019filter}
Y.~He, P.~Liu, Z.~Wang, Z.~Hu, and Y.~Yang,
\newblock ``Filter pruning via geometric median for deep convolutional neural
  networks acceleration,''
\newblock in {\em IEEE Conf. Comp. Vision. Patt. Recog.}, 2019, pp. 4340--4349.

\bibitem{liu2017learning}
Z.~Liu, J.~Li, Z.~Shen, G.~Huang, S.~Yan, and C.~Zhang,
\newblock ``Learning efficient convolutional networks through network
  slimming,''
\newblock in {\em Int. Conf. Comp. Vision}, 2017.

\bibitem{xie2018interleaved}
G.~Xie, J.~Wang, T.~Zhang, J.~Lai, R.~Hong, and G.~Qi,
\newblock ``Interleaved structured sparse convolutional neural networks,''
\newblock in {\em IEEE Conf. Comp. Vision. Patt. Recog.}, 2018.

\bibitem{mocanu2018scalable}
D.~Mocanu, E.~Mocanu, P.~Stone, P.~Nguyen, M.~Gibescu, and A.~Liotta,
\newblock ``Scalable training of artificial neural networks with adaptive
  sparse connectivity inspired by network science,''
\newblock {\em Nature communications}, vol. 9, no. 1, pp. 1--12, 2018.

\bibitem{Wang2020Picking}
Chaoqi W., Guodong Z., and Roger G.,
\newblock ``Picking winning tickets before training by preserving gradient
  flow,''
\newblock in {\em Int. Conf. Learning. Rep.}, 2020.

\bibitem{he2017channel}
Y.~He, X.~Zhang, and J.~Sun,
\newblock ``Channel pruning for accelerating very deep neural networks,''
\newblock in {\em IEEE Int. Conf. Comp. Vision}, 2017, pp. 1389--1397.

\bibitem{huang2018data}
Z.~Huang and N.~Wang,
\newblock ``Data-driven sparse structure selection for deep neural networks,''
\newblock in {\em Europ. Conf. Comp. Vision}, 2018, pp. 304--320.

\bibitem{lin2019towards}
S.~Lin, R.~Ji, C.~Yan, B.~Zhang, L.~Cao, Q.~Ye, F.~Huang, and D.~Doermann,
\newblock ``Towards optimal structured cnn pruning via generative adversarial
  learning,''
\newblock in {\em IEEE Conf. Comp. Vision. Patt. Recog.}, 2019, pp. 2790--2799.

\bibitem{krizhevsky2010convolutional}
A.~Krizhevsky and G.~Hinton,
\newblock ``Convolutional deep belief networks on cifar-10,''
\newblock {\em Unpublished manuscript}, vol. 40, no. 7, pp. 1--9, 2010.

\end{thebibliography}

\end{document}